    \let\NAT@parse\undefined
\def\BibTeX{{\rm B\kern-.05em{\sc i\kern-.025em b}\kern-.08em
    T\kern-.1667em\lower.7ex\hbox{E}\kern-.125emX}}
\acrodef{RACER}{\textbf{R}isk-sensitive \textbf{A}ctor \textbf{C}ritic with \textbf{E}pistemic \textbf{R}obustness}
\acrodef{CVaR}{conditional value at risk}
\newcommand{\MethodName}[0]{\ac{RACER}\xspace}
\newcommand{\cvar}[0]{\textrm{CVaR}}
\newcommand{\var}[0]{\textrm{VaR}}
\newcommand{\R}[0]{\mathbb{R}}
\DeclareMathOperator*{\E}{\mathbb{E}}
\DeclareMathOperator*{\argmin}{arg\,min}
\newcommand{\tenth}[0]{\sfrac{1}{10}}
\newcommand{\KL}[0]{\textrm{KL}}
\newtheorem{thm}{Theorem}
\newtheorem{dfn}{Definition}
\newtheorem{lemma}{Lemma}
\colorlet{ourscolor}{skyblue!30}
\colorlet{ablatecolor}{gray!30}
\DeclareRobustCommand{\hlours}[1]{{\sethlcolor{ourscolor}\hl{#1}}}
\DeclareRobustCommand{\hlablate}[1]{{\sethlcolor{ablatecolor}\hl{#1}}}
\definecolor{fixcolor}{rgb}{1.0, 0.0, 0.0}
\definecolor{rebutcolor}{rgb}{0.0, 0.5, 0.0}
\definecolor{fixcolor}{rgb}{0.0, 0.0, 0.0}
\definecolor{rebutcolor}{rgb}{0.0, 0.0, 0.0}
\begin{document}

\title{RACER: Epistemic Risk-Sensitive RL \\ Enables Fast Driving with Fewer Crashes}

\author{
Kyle Stachowicz, Sergey Levine \\
UC Berkeley \\
\texttt{kstachowicz@berkeley.edu, svlevine@eecs.berkeley.edu}
}

\maketitle

\begin{abstract}
Reinforcement learning provides an appealing framework for robotic control due to its ability to learn expressive policies purely through real-world interaction. However, this requires addressing real-world constraints and avoiding catastrophic failures during training, which might severely impede both learning progress and the performance of the final policy. In many robotics settings, this amounts to avoiding certain ``unsafe'' states. The high-speed off-road driving task represents a particularly challenging instantiation of this problem: a high-return policy should drive as aggressively and as quickly as possible, which often requires getting close to the edge of the set of ``safe'' states, and therefore places a particular burden on the method to avoid frequent failures.
To both learn highly performant policies and avoid excessive failures, we propose a reinforcement learning framework that combines risk-sensitive control with an adaptive action space curriculum.
Furthermore, we show that our risk-sensitive objective automatically avoids out-of-distribution states when equipped with an estimator for epistemic uncertainty.
We implement our algorithm on a small-scale rally car and show that it is capable of learning high-speed policies for a real-world off-road driving task. We show that our method greatly reduces the number of safety violations during the training process, and actually leads to higher-performance policies in both driving and non-driving simulation environments with similar challenges.
\end{abstract}

\IEEEpeerreviewmaketitle

\section{Introduction}
\label{sec:intro}

Reinforcement learning (RL) can in principle allow robots to perform complex and delicate behaviors, such as driving at high speed over rough terrain, while adapting to the particular environment in which they are trained. However, instantiating such methods while training directly in real-world environments
presents a unique set of challenges. The training process is no longer free of consequences, and catastrophic failures during training can impede learning progress, damage the robot, and require costly manual intervention where a person needs to reset the robot. This makes standard RL methods most difficult to apply in precisely the high-performance settings where they might be most beneficial.

In this paper, we study this challenge in the context of high-speed off-road driving. While learning to drive quickly over uneven terrain, high-speed crashes and rollover events can both damage the vehicle and disrupt the learning process, harming final performance. However, when the primary objective is to maximize driving speed, there is a tension between safety and performance: safety requires that the robot stays within a safe region, while achieving maximum performance requires the robot to operate at the edge of this set (as depicted in Fig.~\ref{fig:safe-boundary}).

\begin{figure}[t]
    \centering
    \includegraphics[width=0.9\columnwidth]{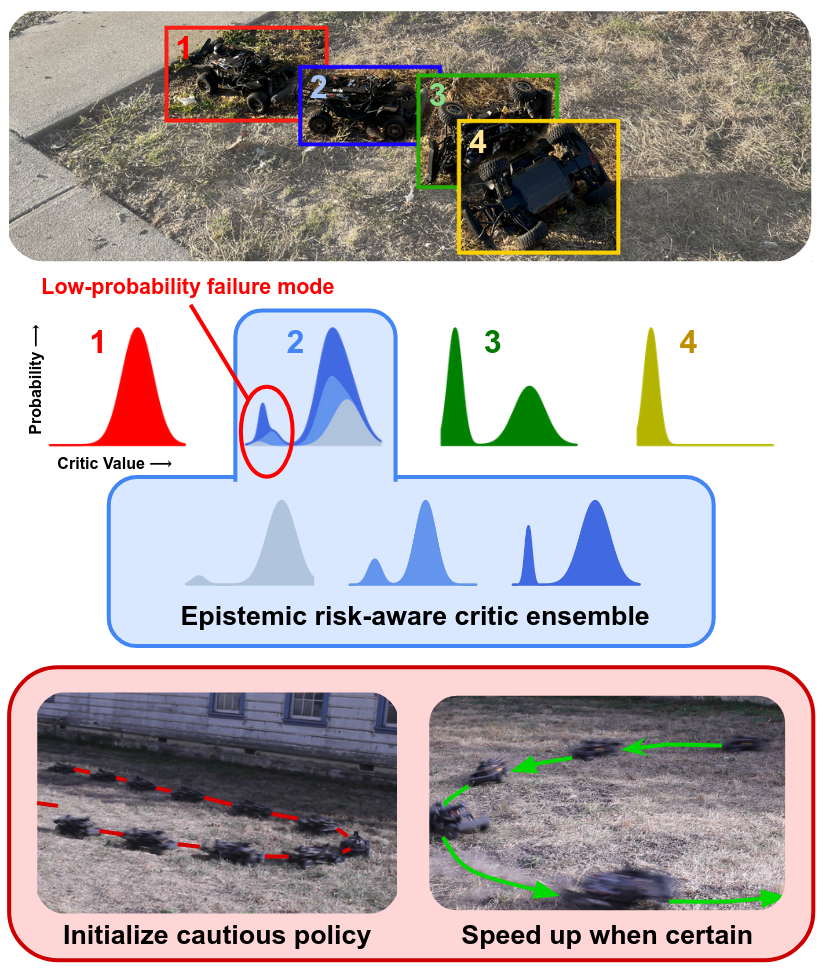}
    \caption{Our method enables high-speed driving with fewer crashes during training. Rare failure events (such as crashes or rollovers) often appear in the return distribution as a low-probability, low-return mode that do not contribute heavily to the expected value of the return.
    By applying a risk-sensitive actor objective (CVaR) to a distributional critic that incorporates epistemic uncertainty and can reason about these rare events, our method simultaneously modulates the robot's action limits and learns a risk-sensitive policy.}
    \label{fig:method_overview}
    \vspace{-18pt}
\end{figure}

Many existing approaches to safety in RL \cite{stooke2020responsive, so2023solving}
consider safety at the \textit{end} of training, once the policy has converged and the return is maximized. However, when training in the real world, it is also crucial to consider safety \textit{during} online training, before convergence \cite{bharadhwaj2020conservative}. During this period, the robot often encounters unfamiliar states during exploration as well as states that are not yet well fit by the training algorithm. The performance of the final learned policy depends critically on the robot's ability to avoid catastrophic failures consistently enough to learn high-performance policies.

We therefore aim to develop a reinforcement learning method that enables real-world robotic systems to learn high-performance behaviors (e.g., high-speed driving) while minimizing failures during training. Our key insight is twofold: firstly, to avoid rare failure events like rollovers or crashes, the agent must both model and effectively respond to low-probability outlier events, even when they are uncertain. Secondly, in many high-performance settings, it is relatively easy to obtain robust low-performance behavior (e.g., low-speed driving), and increase performance over time as the agent’s becomes more certain about the risky high-performance regime.

The primary contribution of this paper is \MethodName, a method for imbuing model-free RL agents with risk sensitivity to account for uncertainty over returns. We build upon distributional RL \cite{bellemare2017distributional}, which models the full distribution of returns rather than a mean point estimate. However, unlike the standard distributional RL setting, we explicitly model two types of uncertainty: \textit{aleatoric} uncertainty, which refers to the irreducible uncertainty in the returns (e.g. due to stochastic environment dynamics) and \textit{epistemic} uncertainty, corresponding to lack of knowledge of the true return distribution due to incomplete data and transient underfitting of newer data that has not yet been fit by the training process.

We model the return distribution as an ensemble of independently trained distributional neural networks. Each individual ensemble member models aleatoric uncertainty in its distributional output, and epistemic uncertainty is measured by the ensemble as a whole.
We propose to optimize the \ac{CVaR} of the distributional critics, which considers an expectation over the $\alpha$ worst-case distribution. We show that this naturally results in an agent that avoids taking actions leading to high \textit{epistemic} risk, as well as the avoidance of actions with highly stochastic returns (aleatoric uncertainty).

We also propose a risk-sensitive mechanism for scheduling exploration in this setting. \MethodName starts by using only a small (cautious) subset of the allowable action space, and then slowly increases the range of allowable actions over time according to a similar CVaR-based objective to that optimized by the actor. We increase the action limits only when the critics are confident that actions near the existing limits are safe.
This combination of risk-sensitive control and adaptive action space bounds provides for cautious exploration in unfamiliar situations, and avoids high-risk situations in familiar settings, leading to fewer crashes and better final performance.

We test \MethodName on a real-world tenth-scale autonomous vehicle performing aggressive off-road maneuvers, and show that our method allows the robot to reach $>$10\% higher speeds at convergence while cutting failures during the course of training by more than half, and almost entirely eliminating high-speed failures. We find that \MethodName compares favorably to several baselines in both driving and non-driving tasks, and demonstrate the importance of each component of our algorithm in reducing failures during training via ablation studies.

\section{Related Work}
\label{sec:background}
\noindent \textbf{Risk-sensitive RL.}
Several recent algorithmic developments aim to introduce risk-sensitive metrics to RL and control \cite{chapman2021risksensitive, chow2017riskconstrained}. \citet{tang2019worst} proposes a similar \ac{CVaR}-based objective for training agents that are robust to \textit{aleatoric} uncertainty about their environment. While these works demonstrate risk-sensitive training in simulation with on-policy methods, we propose a novel integration of CVaR with efficient off-policy algorithms in place of policy gradients \cite{prashanth2014policy} or model-based methods \cite{hakobyan2019riskaware}, enabling rapid training directly in the real world.

\citet{yang2021wcsac} propose a CVaR-based actor-critic algorithm to act as a safety critic, but their method is restricted to a Gaussian distribution representing aleatoric uncertainty only. \MethodName suppports a flexible critic distribution -- allowing representation of long-tailed failures that would be ignored with a Gaussian \cite{alexander2004comparison} -- and handles epistemic uncertainty directly (see Section~\ref{sec:epistemic}), allowing the resulting system to minimize safety violations \textit{during} training, rather than just at policy convergence.

\noindent \textbf{Constrained RL.}
The constrained RL approach casts safety as a \textit{constrained MDP} \cite{altman1999constrained}. Prior work \cite{achiam2017constrained, stooke2020responsive,tessler2018reward,wen2020safe,as2022constrained}
largely considers the problem of learning a policy to minimize test-time failures after the end of training. Safety critics train a discriminator model from offline datasets \cite{bharadhwaj2020conservative} or online (unsafe) interactions enabled via simulation \cite{srinivasan2020learning}. %

\noindent \textbf{Safe RL.}
Alternative approaches have focused on system-level safety \cite{garcia2015comprehensive} by merging classical techniques such as control barrier functions \cite{cheng2019end} or min-max robustness \cite{tamar2013scaling} with reinforcement learning. However, these control-theoretic approaches typically require a high-quality model of the system with simple parametric uncertainty \cite{choi2020reinforcement}, or require building an explicit model of how actions affect safety \cite{dalal2018safe}. In contrast, our method belongs to the class of model-free reinforcement learning algorithms and therefore makes no assumptions about the structure of the dynamics.

\begin{wrapfigure}[16]{r}{0.4\columnwidth}
    \vspace{-10pt}
    \centering
    \includegraphics[width=0.37\columnwidth]{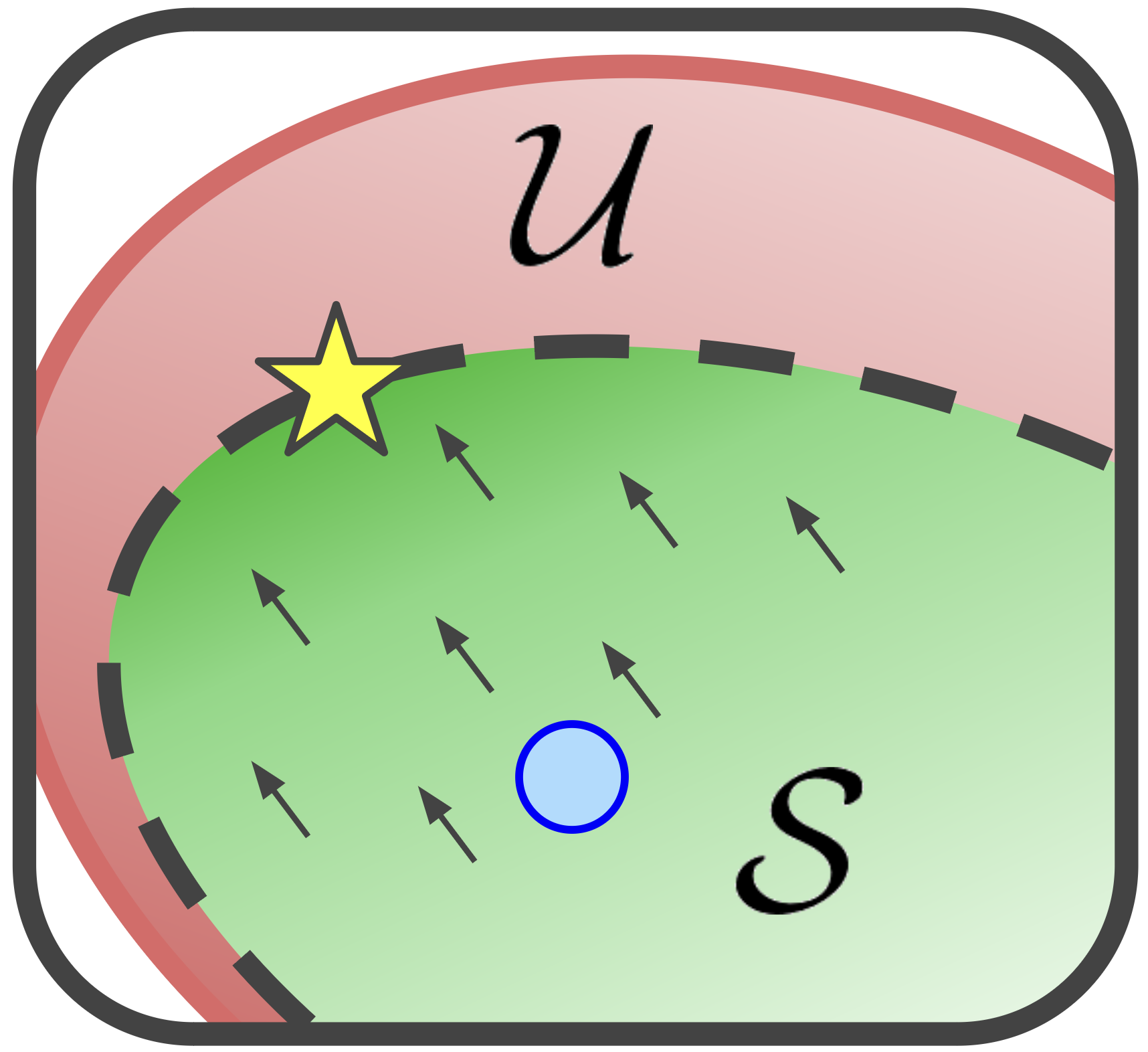}
    \caption{In high-speed driving, maximizing speed (arrows) requires operating on the boundary of the safety set $\mathcal{S}$ (optimal policy: star) to avoid unsafe states $\mathcal{U}$. Enforcing strict safety yields an overly conservative policy (blue).}
    \label{fig:safe-boundary}
\end{wrapfigure}
\noindent \textbf{High-Speed Driving.}
Substantial prior work considers high-speed driving in the classical control setting \cite{goel2020brakes}. These approaches require deriving a dynamics model from first principles and fitting parameters according to physically measured quantities or system identification. Unfortunately it is difficult to handcraft a dynamics model for off-road driving. Model-predictive control methods with learned models \cite{williams2018information} have been applied with some success in these domains. However, MPC-based methods have difficulty handling high-dimensional observations, as planning suffers from compounding error \cite{chua2018deep}. Because of these limitations, these works largely consider a simplified planar state-space. In the on-road driving setting, several works have considered the problem of modeling and preventing rollover, primarily in the quasistatic \cite{li2019rollover} case or by assuming a flat ground plane with constant wheel contact \cite{jalali2018model}, which cannot be assumed in the off-road setting. In comparison, our model-free approach aims to handle complex environments where an accurate hand-designed model might be difficult or impossible to obtain due to complex or chaotic system dynamics. In these settings a true safety \textit{guarantee} would out of necessity require a very conservative policy. Nevertheless, we find that our method greatly reduces failures during the training process and results in a highly performant policy.

Reinforcement learning has also been applied to high-speed driving in simulation \cite{Cai_2020, fabian2022drifting}. The closest prior work on a real robot is \cite{stachowicz2023fastrlap}, which also learns a fast driving policy through real-world interactions. However, unlike \citet{stachowicz2023fastrlap}, \MethodName is able to explicitly consider safety during the training process. We present a real-world comparison between the two methods in Sec.~\ref{sec:experiments}.

\section{Preliminaries}
\label{sec:preliminaries}

We first introduce standard notation and definitions in reinforcement learning and risk-sensitive optimization.

\subsection{Reinforcement learning}
Consider a Markov decision process $\mathcal M = (\mathcal S, \mathcal A, \mathcal T, \mathcal R)$ defined by a state space, an action space, a transition density, and a reward function respectively. We specifically consider the case of \textit{non-episodic} RL, in which the robot does not receive external state resets to some stationary initial distribution, except in the case of catastrophic failure, and these external resets are very expensive. This is reflective of the real world, where external resets may be unavailable or require human supervision.
We wish to learn a high-quality policy while minimizing the number of catastrophic failures (and hence, expensive external resets) across the course of training.

$Q$-learning methods approach the RL problem by learning a mapping $Q(s, a)$ from state-action pairs to the expected discounted return of the current policy. $Q(s, a)$ can be ``bootstrapped'' by iteratively performing Bellman updates:
\[Q^{k+1}(s, a) \gets \E_{r, s'}\left[r + \gamma \max_{a'} Q^k(s', a')\right]\]

Actor-critic methods extend $Q$-learning to continuous actions by sampling $a'$ from some function approximator trained to maximize $Q(s, a)$. These methods have proven relatively sample efficient \cite{schwarzer2023bigger}, motivating their use in learning directly in the real world rather than performing sim-to-real transfer \cite{zhao2020sim}. In such methods, we learn a policy $\pi$ and a critic function $Q^\pi(s, a)$ representing the expected discounted return from taking action $a$ for one step and then following the policy $\pi$. In deep RL, we typically parameterize both the actor and the critic with neural networks $\pi_\theta$ and $Q_\phi$.

\subsection{Distributional RL}
In the standard actor-critic setting we learn a critic that represents the \textit{expected} future discounted return and optimize an actor with respect to the critic. The \textit{distributional} RL perspective \cite{bellemare2017distributional, dabney2018distributional} instead learns the distribution of the returns $Z(s, a)$ and then maximizes its expectation $Q(s, a) = \E Z(s, a)$.

The simplest approach to distributional RL assumes the return distribution is Gaussian and parametrizes $Z(s, a)$ by its mean and a variance \cite{morimura2012parametric}. However, it is often desirable to represent $Z$ with a more flexible distribution. Two popular choices are a categorical distribution, in which the critic is optimized by minimizing KL divergence \cite{bellemare2017distributional}, and a quantile distribution with an L1-type loss \cite{dabney2018distributional}. These methods tend to enable better feature learning and are a key component of state-of-the-art RL algorithms in discrete domains \cite{hessel2018rainbow}.

In distributional RL, we can no longer minimize MSE to perform our approximate Bellman backup (as our critic is now a probability distribution rather than a point estimate). Instead, we typically minimize the KL-divergence to our target distribution $Z_{\textrm{targ}} = r + \gamma Z(s', a')$. When $Z$ is represented as a categorical distribution the KL-divergence may be undefined because the atoms of $Z$ and the target distribution are not aligned. The backup is approximated by projecting $Z_{\textrm{targ}}$ onto the atoms of $Z$; see \cite{bellemare2017distributional} for further details.

\subsection{Conditional Value at Risk}
When considering safety, is desirable to be conservative in the face of uncertainty: all else being equal, we should prefer the certain outcome over an uncertain one with the same expected value.
Particularly when there exists some binary indicator representing failure, returns will tend to follow a multimodal distribution: there is a high-probability, high-return mode corresponding to nominal behavior, and a low-probability, low-return mode corresponding to failure. Because failures require costly intervention, may damage the robot, and can harm the learning process, it is important to avoid this low-return mode, motivating a risk-sensitive formulation.

During training and exploration it is particularly important to be conservative with respect to \textit{epistemic} uncertainty; that is, risk corresponding to missing data or to a model that has not yet converged. Ideally our notion of risk should effectively handle both epistemic and aleatoric uncertainty (stochasticity). While it is possible to be sensitive to aleatoric risk by increasing the penalty for negative events \cite{whittle1981risk}, considering epistemic risk requires applying a risk metric to the actual (epistemic) return distribution.

\ac{CVaR} is one such instantiation of risk-sensitivity \cite{alexander2004comparison}. Let $Z$ be some scalar random variable. We can write \ac{CVaR} as:
\[\cvar_\alpha(Z) = \E[Z|Z<\var_\alpha(Z)],\]
where $\var(Z)$ (the \textit{value at risk}) is the $\alpha$-worst percentile value of $Z$. In our case we consider the \ac{CVaR} of the return distribution, which reflects the discounted return distribution of a state-action pair under the current policy.

\ac{CVaR} has many favorable properties as an optimization objective. It is less pessimistic than the worst-case value (the minimum value attainable, no matter how unlikely), which can be extremely noisy and even unbounded for stochastic systems. However, it tends to represent risk more effectively than value-at-risk \cite{alexander2004comparison}, while being more stable with respect to the underlying probability distribution \cite{chow2014algorithms}.

Additionally, CVaR is a \textit{convex} risk measure in the sense that $\cvar_\alpha(\mu Y + \nu Z) \le \mu \cvar_\alpha(Y) + \nu\cvar_\alpha(Z)$ for $\mu + \nu = 1$ and $\mu, \nu \ge 0$. We present a short proof in the appendix; see \citet{pflug2000some} for a detailed derivation.

We aim to optimize the CVaR of $Z$, which is difficult to compute in dynamic settings because of \textit{time inconsistency}~\cite{artzner1999coherent, boda2006time}. This property prevents optimization via dynamic programming, as the true optimal policy is non-Markov and depends on accrued rewards. Instead, we consider the return distribution and optimize $\cvar_\alpha(Z)$ under a Markov policy. While previous works take advantage of a closed-form expression for CVaR in the case of Gaussian return distribution~\cite{yang2021wcsac} or require use of on-policy (policy gradient-based), we use a more flexible categorical model of $Z$ and show that its CVaR can still be optimized using an efficient off-policy algorithm.

\section{Risk-Sensitive Exploration with RACER}
\label{sec:methods}
\MethodName (Fig.~\ref{fig:method-details}) is composed of three primary components: a risk-sensitive actor trained with a \ac{CVaR}-based objective, a distributional critic function incorporating both epistemic and aleatoric uncertainty, and an adaptive action limit that starts by restricting the policy to take ``cautious'' actions but becomes less restrictive over time. Both the actor and the adaptive action limits are updated based off of the distributional critic, and therefore the critic's epistemic and aleatoric uncertainty. The adaptive action limits act as a post-processing step on the actor network. We denote the distributional critic as $Z_\phi$, the (pre-limits) actor as $\bar\pi_\theta$, the adaptive limits as $L_\psi$, and the limited policy (combining the actor and the limits) as $\pi_{\theta}$.

\begin{figure*}
    \centering
    \begin{minipage}{0.5\textwidth}
        \centering
        \includegraphics[width=0.9\textwidth]{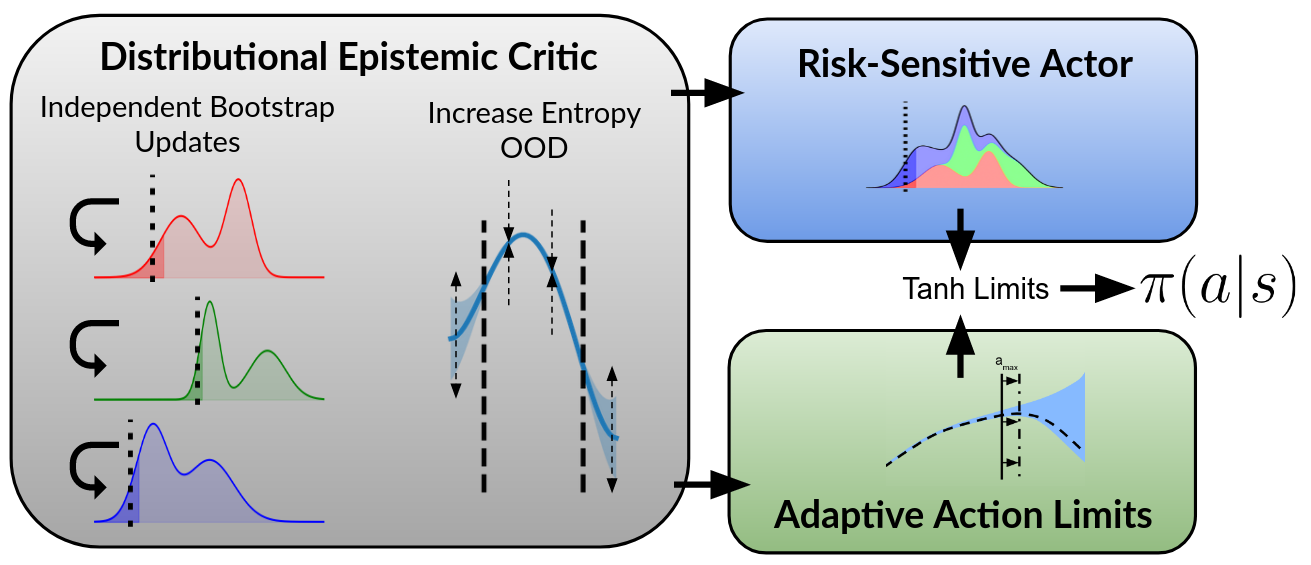}
        \caption{\MethodName and its three main components. A distributional critic captures epistemic uncertainty via ensembling and explicit entropy maximization beyond action limits. A risk-sensitive actor and adaptive action limits use the distributional critic to increase speed over time while reducing failures during training.}
        \label{fig:method-details}
    \end{minipage}
    \hfill
    \begin{minipage}{0.45\textwidth}
        \centering
        \includegraphics[width=0.75\textwidth, trim={2.5cm 0.5cm 2.5cm 1cm}]{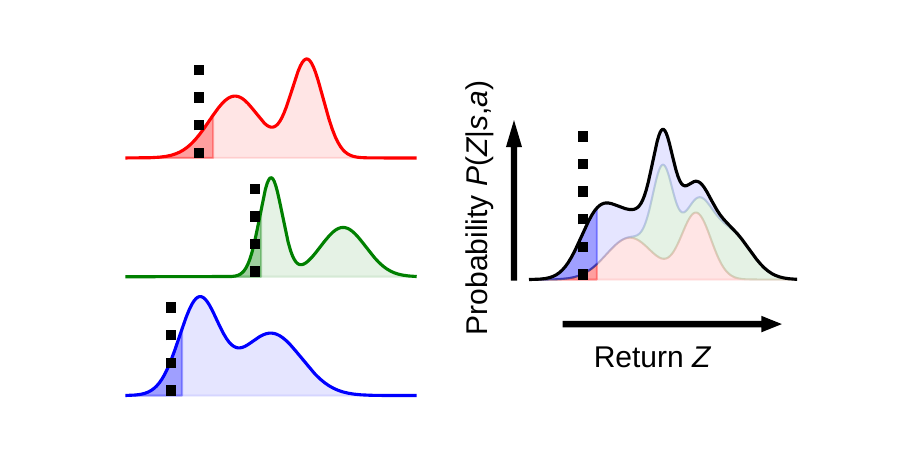}
        \caption{CVaR naturally accounts for epistemic uncertainty when applied to the mixture distribution output of an ensemble. When the ensemble disagrees about the distribution, the CVaR of their mixture prioritizes more pessimistic ensemble members.}
        \label{fig:cvar-ensemble}
    \end{minipage}
    \vspace{-10pt}
\end{figure*}

\subsection{Risk-Sensitive Deep RL with CVaR}
\label{sec:cvar-rl}

We optimize the CVaR objective via actor-critic RL, similarly to \citet{yang2021wcsac}.
To provide a flexible critic distribution, we learn a \textit{categorical} distributional value function $Z(s, a)$ \cite{bellemare2017distributional} with distributional Bellman backups, where each iteration is a sample-based approximation to the update
\[Z^{k+1} \leftarrow \argmin_Z \E\left[\KL(Z(s, a) || r(s, a) + \gamma Z^k(s', a'))\right],\]
where $a' \sim \pi^k(s')$.
Rather than optimizing the actor to maximize $Q(s, a) = \mathbb{E}Z(s, a)$, we optimize the $\cvar$ with respect to the actor parameters $\theta$ over $a \sim \pi_{\theta\psi}^k(s)$:
\[\pi^k \leftarrow \max_{\theta} \E_{a \sim \pi_{\theta\psi}(s)}\cvar_\alpha(Z^k(s, a)).\]
In practice, the updates above are implemented approximately, using one step of stochastic gradient descent on the objective with respect to $\theta$ at each iteration:
\[\mathcal{L}_\pi = -\E_{a \sim \pi_{\theta}}\cvar_\alpha(Z_\phi(s, a)),\]
following the standard procedure for implementing off-policy (distributional) deep actor-critic methods~\cite{lillicrap2015continuous,haarnoja2018soft,barth2018distributed}.

We represent $Z(s, a)$ as a neural network outputting a categorical distribution over discrete bins. Unlike prior work which represented $Z$ as Gaussian \cite{yang2021wcsac}, there is not a simple closed-form solution for the CVaR in this context. Instead, we provide a differentiable procedure for computing CVaR of a categorical distribution in Sec.~\ref{sec:impl-details}, which we then optimize to perform gradient descent on the actor. 

\subsection{Learning Distributional Epistemic Critics}
\label{sec:epistemic}

Because \ac{CVaR} is sensitive to the worst-case outcomes, its use as a policy objective implicitly penalizes states or actions resulting in highly uncertain distributions over returns. This means that \ac{CVaR} will implicitly try to avoid states and actions with a high degree of uncertainty.
There are two types of uncertainty: \textit{aleatoric} uncertainty, or irreducible stochasticity in the environment, and \textit{epistemic} uncertainty, which represents unknown quantities. While at convergence aleatoric uncertainty dominates, since the state distribution eventually stops changing as the policy converges, epistemic uncertainty is the more significant contributor to catastrophic failures during training, when the robot does not \emph{know} for sure whether or not a failure might occur. In light of this, we consider two possible approaches to incorporating epistemic uncertainty in distributional RL.

\subsubsection{Ensembled critics}
rather than a single predictor, an ensemble of independent neural networks can be applied to capture epistemic uncertainty \cite{ganaie2022ensemble, rahaman2021uncertainty, lakshminarayanan2017simple}. 
We train each ensemble member $Z_{\phi_i}$ individually with loss:
\[\mathcal{L}_{Z_{\phi_i}} = \E_{\substack{s, a, s' \sim \mathcal{D} \\ a' \sim \pi}}\Big[\KL\left(r+\gamma Z_{\phi_i'}(s', a')||Z_{\phi_i}(s, a)\right)\Big].\]

By maintaining a collection of models with varied parameters, which are only ever trained to agree on in-distribution data, we expect the ensembles to disagree on out-of-distribution data \cite{electronics10050567}.

\newcommand{\tail}[1]{\mathcal{T}_\alpha{#1}}
\newcommand{\emd}[2]{\left\lVert #1 - #2 \right\rVert_{\mathrm{EMD}}}
\newcommand{\tailemd}[2]{\emd{\tail{#1}}{\tail{#2}}}

\begin{thm}\label{thm:cvar-ensemble}
    Let $Z_i$ be real-valued random variables with density $p_i(z)$. Denote the random variable with density $\hat p(z) = \frac{1}{N}\sum_i p(z)$ as $\hat Z$. Then for $\alpha > 0$:
    \begin{equation}
        \cvar_\alpha(P) \le \frac{1}{N}\sum_i \cvar_\alpha(P_i)
    \end{equation}
    We call the positive difference $\frac{1}{N}\sum_i \cvar_\alpha(Z_i) - \cvar_\alpha Z_i$ the \textbf{ensemble CVaR gap}. (Proof in Appendix~\ref{appendix:cvar-proofs})
\end{thm}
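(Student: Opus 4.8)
The plan is to reduce the claim to the Rockafellar--Uryasev variational characterization of CVaR, which rewrites the (nonlinear) quantile-based definition as a pointwise supremum of functionals that are \emph{linear} in the underlying law. For the lower-tail convention used here I would first record the identity
\[\cvar_\alpha(Z) = \sup_{t \in \R}\left\{\, t - \frac{1}{\alpha}\,\E\big[(t - Z)^+\big]\,\right\},\]
and verify, by substituting $t = \var_\alpha(Z)$, that the supremum is attained at the value-at-risk and recovers the definition $\cvar_\alpha(Z) = \E[Z \mid Z < \var_\alpha(Z)]$. This representation is the crucial tool, because the inner objective depends on the law of $Z$ only through an expectation, and expectations transform linearly under mixtures. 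I would note that the convexity lemma proved in the appendix does \emph{not} apply directly here: that statement concerns convex combinations of random variables, $\mu Y + \nu Z$, whereas $\hat Z$ is a mixture of the \emph{laws} $p_i$, a genuinely different operation that requires the dual representation.

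Next I would exploit the defining property of the mixture. Since $\hat p = \frac{1}{N}\sum_i p_i$, for any integrable $f$ we have $\E_{\hat Z}[f] = \frac{1}{N}\sum_i \E_{Z_i}[f]$. Applying this with $f(z) = (t - z)^+$ and inserting it into the variational formula gives
\[\cvar_\alpha(\hat Z) = \sup_{t}\ \frac{1}{N}\sum_i \left\{\, t - \frac{1}{\alpha}\,\E_{Z_i}\big[(t - Z_i)^+\big]\,\right\} = \sup_t\ \frac{1}{N}\sum_i g_i(t),\]
where $g_i(t) := t - \frac{1}{\alpha}\,\E_{Z_i}[(t - Z_i)^+]$ is exactly the per-component objective whose supremum over $t$ equals $\cvar_\alpha(Z_i)$.

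The final step is the elementary bound that a supremum of an average is at most the average of the suprema, $\sup_t \frac{1}{N}\sum_i g_i(t) \le \frac{1}{N}\sum_i \sup_t g_i(t)$. Combined with $\sup_t g_i(t) = \cvar_\alpha(Z_i)$, this yields the claimed inequality. The ensemble CVaR gap is precisely the slack in this sup-of-average step: it is nonnegative, and strictly positive exactly when the $g_i$ fail to share a common maximizer, i.e. when the ensemble members have differing value-at-risk levels. This gives a clean interpretation matching Fig.~\ref{fig:cvar-ensemble}, since ensemble disagreement about the lower tail is what forces distinct maximizing thresholds and hence a positive gap.

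I expect the only real obstacle to be bookkeeping rather than mathematics: pinning down the sign/tail convention so the variational formula matches the paper's lower-tail definition, and handling the boundary cases where the $Z_i$ place atoms at their quantile (so $\var_\alpha$ is non-unique) or where $\alpha$ is small. Crucially, none of these affect the inequality, since the sup-of-average argument never uses attainment of the supremum; the entire substantive content lies in the variational representation together with the linearity of expectation under mixtures.
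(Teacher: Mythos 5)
Your proof is correct, and while it runs on the same engine as the paper's, the execution is genuinely different. The paper proves the theorem as a one-line corollary of a convexity lemma: it first shows, via the Rockafellar--Uryasev variational representation in its \emph{infimum} form, that the CVaR of a two-component mixture is bounded by the convex combination of the CVaRs (evaluating at the two component minimizers $a_1, a_2$ and using convexity of $u \mapsto \max(0,u)$), and then dispatches the $N$-component statement by ``applying Jensen's inequality.'' You instead apply the dual representation directly to the $N$-component mixture, use linearity of expectation under mixtures of laws, and finish with the elementary bound that a supremum of an average is at most the average of the suprema. Your instantiation is actually the more apt one for this statement, for precisely the reason you flag: mixing \emph{laws} is not the same operation as forming the convex combination $\lambda X + (1-\lambda)Y$ of random variables, and the paper's lemma proof as written manipulates the latter (its displayed chain begins with $\cvar_\alpha(\lambda X + (1-\lambda)Y)$), so it implicitly needs your linearity-of-the-law observation to apply to $\hat Z$ at all. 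Your route also makes the direction of the inequality transparent -- the lower-tail CVaR is a supremum of functionals affine in the law, hence convex in the law, whereas the upper-tail infimum form would reverse the mixture inequality -- and it gives the ensemble CVaR gap a concrete interpretation (slack from the components' maximizing thresholds, i.e.\ their VaRs, failing to coincide), which dovetails nicely with Theorem~\ref{thm:cvar-tail-emd}. The only blemish is bookkeeping, as you anticipated: under the paper's convention, in which the tail carries probability mass $1-\alpha$, the multiplier in your variational formula should be $\frac{1}{1-\alpha}$ rather than $\frac{1}{\alpha}$; this changes nothing in the structure of the argument.
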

\begin{thm}\label{thm:cvar-tail-emd}
    Assume $Z_i$ has finite first moment. Then we have:
    \begin{equation}
    \frac{1}{N}\sum_i\cvar_\alpha(Z_i) - \cvar_\alpha(\hat Z) \le \frac{1}{N}\sum_i \tailemd{\hat Z}{Z_i},
    \end{equation}
    where $\tail{X}$ is the \textbf{tail} of the distribution $X$ and $\emd{X}{Y}$ is the \textbf{earth mover's distance}. (Proof in Appendix~\ref{appendix:cvar-proofs})
\end{thm}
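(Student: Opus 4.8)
The plan is to recognize the ``ensemble CVaR gap'' as an average of differences of \emph{tail means}, and then to control each such difference by the Wasserstein-$1$ (earth mover's) distance between the corresponding tails, using the elementary fact that $\emd{\cdot}{\cdot}$ dominates any difference of expectations taken against a $1$-Lipschitz test function.

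First I would make precise the reading of $\tail{X}$ as the \emph{lower $\alpha$-tail} of $X$: the law of $X$ conditioned on $\{X \le \var_\alpha(X)\}$, renormalized to a probability measure on $\R$. With this convention the defining identity $\cvar_\alpha(X) = \E[X \mid X \le \var_\alpha(X)]$ reads exactly as $\cvar_\alpha(X) = \E_{W \sim \tail{X}}[W]$, i.e.\ CVaR is the \emph{mean of the tail}. Applying this to each ensemble member $Z_i$ and to the mixture $\hat Z$, the gap decomposes termwise:
\[
\frac{1}{N}\sum_i \cvar_\alpha(Z_i) - \cvar_\alpha(\hat Z) = \frac{1}{N}\sum_i \big(\E[\tail{Z_i}] - \E[\tail{\hat Z}]\big).
\]
Note that Theorem~\ref{thm:cvar-ensemble} already guarantees the left-hand side is nonnegative, but the bound I derive holds termwise regardless of the sign of each summand.

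Next I would bound each summand. By Kantorovich--Rubinstein duality, $\emd{A}{B} = \sup_{f:\,\mathrm{Lip}(f)\le 1}\big(\E f(A) - \E f(B)\big)$; choosing the $1$-Lipschitz test function $f(z)=z$ gives $\E[A]-\E[B] \le \emd{A}{B}$ for any two probability measures with finite first moment. Instantiating $A = \tail{Z_i}$ and $B = \tail{\hat Z}$ yields $\E[\tail{Z_i}] - \E[\tail{\hat Z}] \le \tailemd{\hat Z}{Z_i}$ (using that EMD is symmetric). Averaging over $i$ then produces the claimed inequality, and the finite-first-moment hypothesis is exactly what guarantees that each tail mean is finite and each EMD is meaningful.

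The main obstacle is bookkeeping around the tail definition rather than any deep estimate. I must ensure $\tail{Z_i}$ and $\tail{\hat Z}$ are compared as measures of \emph{equal} total mass, so that a transport plan, and hence $\emd{\cdot}{\cdot}$, is well-defined; normalizing both to probability measures handles this cleanly. I must also confirm the tail-mean identity for the mixture, whose tail is cut at its own quantile $\var_\alpha(\hat Z)$ rather than at the members' quantiles $\var_\alpha(Z_i)$. Because Theorem~\ref{thm:cvar-ensemble} already works in the density setting, $\Pr(X = \var_\alpha(X)) = 0$ and the conditional-expectation form of $\cvar_\alpha$ is exact, so no boundary-atom correction is needed; for general laws with finite first moment one would instead invoke the superquantile (Rockafellar--Uryasev) representation of $\cvar_\alpha$, which reproduces the same tail-mean identity and leaves the argument intact.
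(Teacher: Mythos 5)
Your proposal is correct and takes essentially the same approach as the paper's proof: both decompose the ensemble CVaR gap into an average of tail-mean differences $\E[\tail{Z_i}] - \E[\tail{\hat Z}]$ and bound each term by $\tailemd{\hat Z}{Z_i}$. The only difference is in justifying that a difference of means is dominated by the earth mover's distance: you cite Kantorovich--Rubinstein duality with the $1$-Lipschitz test function $f(z)=z$, whereas the paper derives the same fact directly from its CDF-integral definition of EMD by dropping the absolute value and integrating by parts.
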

In practice we find that this bound is relatively tight (Appendix~\ref{appendix:cvar-gap-empirical}). This suggests that CVaR is conservative with respect to epistemic uncertainty represented as disagreement between individual members and the overall ensemble (Fig.~\ref{fig:cvar-ensemble}).

\subsubsection{Explicit entropy maximization}
we draw inspiration from conservative Q-learning \cite{kumar2020conservative}, in which out-of-distribution (OOD) actions are explicitly penalized by minimizing the critic on randomly-sampled actions. In our case, we wish to make OOD actions \textit{more uncertain} rather than decreasing their mean. We use entropy as a proxy for uncertainty, reducing the entropy of the return distribution $Z$ on seen datapoints and maximizing for sampled OOD actions. The modified critic loss for a single ensemble member using this form of epistemic uncertainty is then as follows with new terms in {\color{blue} blue}:
\begin{dmath}
    \mathcal L_Z = \E_{\substack{s, a, s' \sim \mathcal{D} \\ a' \sim \pi_{\theta}}} {\Big[}\KL\left(r + \gamma Z_{\phi'}(s', a')||Z_\phi(s, a)\right) \\
    {\color{blue}\quad\quad + \E_{a' \sim \pi_\theta}\mathcal{H}(Z_\phi(s, a')) - \E_{a' \sim \bar\pi_\theta}\mathcal{H}(Z_\phi(s, a'))}{\Big]},
\end{dmath}
where $\phi'$ is a delayed copy of $\phi$.

The out-of-distribution actions are drawn from any policy that samples actions outside of the action limits. We use $\bar\pi_{\theta}$, the \textit{pre-limit} policy; that is, the policy distribution induced by $\pi$ \textit{before} applying adaptive action limits.

\subsection{Adaptive Action Limits}
\label{sec:adaptive-limits}

In many real-world problems, including autonomous off-road driving, we have a strong prior in that ``cautious'' actions (for example, low speeds) tend to be safe. For example, for autonomous driving, our action space has two components: a servo command for the robot's steering in range $[-1, 1]$ and a velocity target for a low-level motor controller bounded by $[v^-, v^+]$, with the velocity target corresponding directly to cautious/risky actions. To adaptively adjust these bounds across the course of training, we want to increase our bounds whenever we are sufficiently certain about the return distribution $Z(s, a)$ for $a$ \textit{outside} (or at the edge) of those boundaries.

\begin{figure}[ht]
    \centering
    \includegraphics[width=0.8\columnwidth]{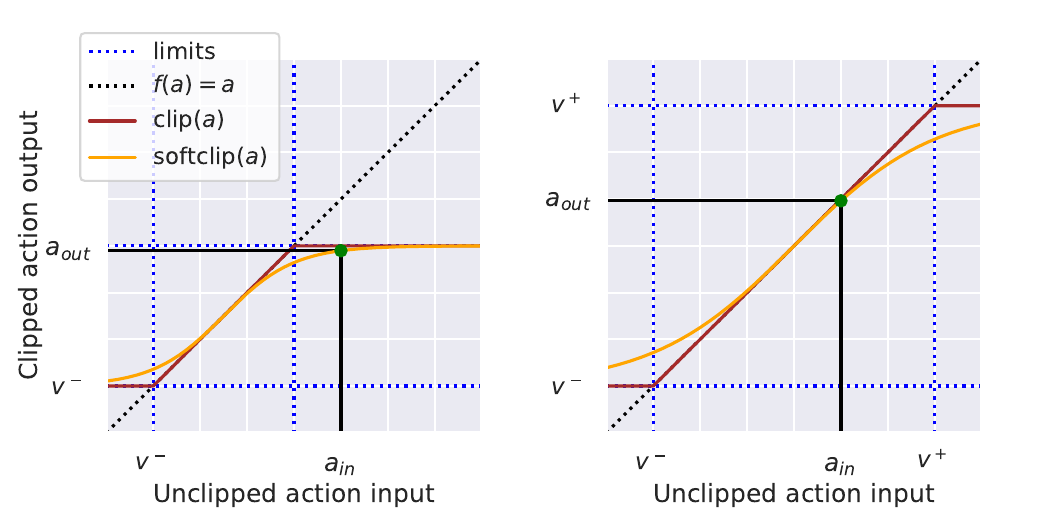}
    \caption{\MethodName applies a shifted $\tanh$ to the output of the actor to enforce a particular set of limits. As the limits are adjusted (left $\to$ right), the mapped values of actions already within the limits change very little, while outputs for outside values change significantly.}
    \label{fig:tanh_limits}
\end{figure}

Following \cite{stachowicz2023fastrlap}, we use a shifted $\tanh$ function to soft-clip the actor's output into the desired range:
\begin{equation}
\begin{gathered}
    \mathrm{softclip}(a, v^-, v^+) = \eta\tanh\left(\frac{1}{\eta}(a - \mu)\right) + \mu; \\ 
    \eta = \frac{v^+-v^-}{2}, \mu = \frac{v^++v^-}{2}
\end{gathered}
\end{equation}

This parameterization approximates $\mathrm{clip}(a, v^-, v^+)$, meaning that the bounds can be adjusted on the fly and for $a \in [v^-, v^+]$ the soft-clip will have minimal effect.

This differentiable approximation to $\mathrm{clip}$ allows us to adjust the action limits using the same risk-sensitive objective we use to optimize the policy. Because out-of-distribution actions (e.g., actions not appearing in any collected data) are subject to high epistemic uncertainty, maximizing the $\cvar$ objective will only grow the limits for as long as the critic is certain. We thus adaptively adjust the limit it over the course of training by taking gradient steps to maximize:
\[\mathcal{L}_L(v^+) = -\E_{s \sim \mathcal{D}, a \sim \pi_\theta}\left[\cvar_\alpha Z(s, \mathrm{softclip}(a, v^-, v^+))\right].\]

\subsection{Implementation Details}
\label{sec:impl-details}

For real-world training we apply sample-efficient RL by applying many more gradient steps on the critic than the number of steps we take in the environment \cite{chen2021randomized, d2022sample}.
As observed in the literature \cite{d2022sample, schwarzer2023bigger, chen2021randomized}, this type of sample-efficient RL algorithm often requires regularization to avoid overfitting to early samples from the non-stationary data distribution. Our method already uses ensembling \cite{chen2021randomized}; we additionally apply weight decay to the critic network. %

We compute the \ac{CVaR} of a categorical distribution in a differentiable fashion. As shown in Algorithm~\ref{alg:cvar}, we compute CVaR by finding the cumulative distribution of the distribution $W = P(Z|Z\le \var_\alpha(Z))$, which is equivalent to the CDF of $Z$ clipped to $\alpha$. The CDF is then differentiated to obtain $W$, and $\cvar_\alpha(Z)$ is computed as its expectation.

We provide a reference implementation of \MethodName in JAX. Additionally; we compile an extensive list of hyperparameters and architectural choices, as well as pseudocode and a reference implementation of \MethodName, on our website: \url{https://sites.google.com/view/racer-epistemic-rl}.

\begin{algorithm}
\caption{Compute CVaR}
\label{alg:cvar}
\begin{algorithmic}[1]
\State \textbf{Input:} $0 \le \alpha < 1$, distribution $Z$ via PDF $P_i$, atoms $Z_i$
\State \textbf{Intermediates:} CDF $C_i$, worst-cases PDF $\hat P_i$, CDF $\hat C_i$
\State \textbf{Output:} $\cvar_\alpha(Z)$
\Procedure{ComputeCVaR}{$P_i$, $Z_i$, $\alpha$}
    \State $C_i \gets \sum_{i'=1}^i(P_i)$  \Comment{Compute CDF (\texttt{np.cumsum})}
    \State $\hat C_i \gets \frac{\min(C_i, 1-\alpha)}{1-\alpha}$ \Comment{Worst-cases CDF\quad\quad\quad\quad\quad{\color{white}.}}
    \State $\hat P_i \gets C_i - C_{i-1}$ \Comment{Worst-cases PDF  (\texttt{np.diff}){\tiny\color{white}.}}
    \State \Return $\sum_i \hat P_i Z_i$
\EndProcedure
\end{algorithmic}
\end{algorithm}

\section{Experiments}
\label{sec:experiments}

Our experimental evaluation aims to study the performance of our proposed risk-sensitive RL algorithm, both on a real-world robotic platform and in a simulated environment that provides a more controlled setting for rigorous comparisons. The primary goal of our method is to train the fastest possible driving policy that can traverse an outdoor course with uneven terrain while avoiding catastrophic failure in the form of high-speed collisions or rollover events. We hypothesize that minimizing failures during training is instrumental to achieving high performance, though of course minimizing the number of failures is also inherently desirable to avoid damage to the hardware. We therefore report both the cumulative number of failures and the final speed for each experiment. We do \emph{not} aim to eliminate failures entirely, which is exceedingly difficult when learning from scratch.

\subsection{Real-World Experiments}

Our real-world evaluation of \MethodName uses a \tenth-scale remote-controlled car based on the F1TENTH platform \cite{f1tenth}. Following prior work \cite{stachowicz2023fastrlap}, we specify a course as a sparse sequence of checkpoints $\{c_i\}$, and define the reward function as ``speed-made-good'' $\vec v \cdot \vec g$, where $\vec v$ is the robot's velocity and $\vec g$ is the unit vector towards the next goal checkpoint. The robot's observation space consists of proprioceptive measurements (local velocity and IMU measurements) and a sequence of the next two goal vectors, generated using a GPS-based state estimator. Whenever one checkpoint $\{c_i\}$ is reached (measured via the robot's onboard GPS measurement), the goals are updated to reflect the next checkpoints.

We compare to FastRLAP \cite{stachowicz2023fastrlap}, a sample-efficient autonomous learning system for the high-speed driving setting, which uses an RL method based on SAC~\cite{haarnoja2018soft}.
As shown in Table~\ref{tab:results_real},
\MethodName learns high-speed policies while largely avoiding failures during the training process. 

\begin{table}[ht]
    \captionsetup{font=footnotesize}
\centering

\setlength{\tabcolsep}{3pt}
\begin{tabular}{c|c|c|c|c}
    Algorithm & \# Fails $\downarrow$ & \# Fails $>$2m/s $\downarrow$ & Lap time $\downarrow$ & Speed (m/s) $\uparrow$ \\
    \hline
    Vanilla SAC~\cite{haarnoja2018soft} & 4 & 2 & 5.9 & 2.92 \\
    \rowcolor{skyblue!30} \textbf{Ours} & \textbf{2} & \textbf{0} & \textbf{5.2} & \textbf{3.32} \\
\end{tabular}
\caption{Results from real-world experiments. Our algorithm accrues fewer failures over the course of training while reaching comparable or better final performance (measured by lap time). Because the policy begins with a cautious action limit, its early failures occur at lower speeds. By smoothly adapting action limits over time \MethodName transfers its low-speed knowledge to higher speeds, avoiding high-speed failures ($>$2m/s) that could cause damage to the robot during training.}
\label{tab:real}

    \label{tab:results_real}
\end{table}

\begin{figure}
    \centering
    \raisebox{0.2in}{\includegraphics[width=0.2\columnwidth]{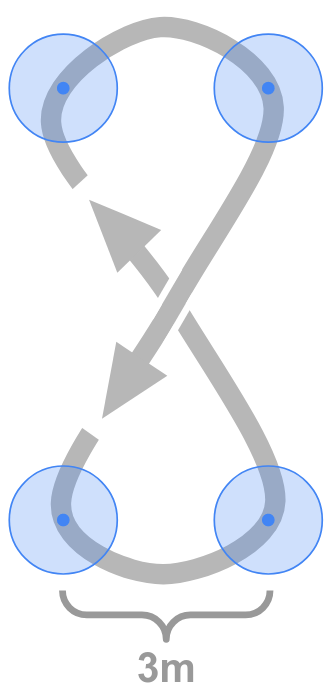}}
    \includegraphics[width=0.78\columnwidth]{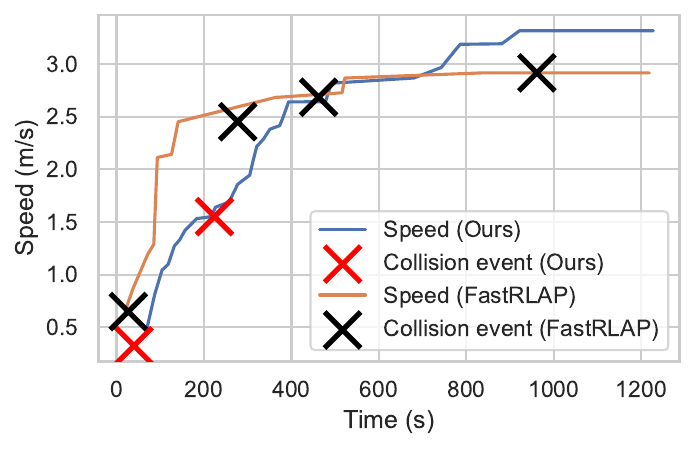}
    \caption{Real-world experimental results with a figure-eight driving course (left). Learning curves (right) show the robot's average speed-made-good over a single lap (equivalently, total lap length divided by lap time). \MethodName experiences fewer crashes during training, especially later in training when the robot is operating at high speeds.}
    \label{fig:real-world}
    \vspace{-15pt}
\end{figure}

Real-world learning curves are shown in Fig~.\ref{fig:real-world}. Please see the supplemental materials 
for videos of high-speed driving behavior learned by \MethodName.

\subsection{Simulated Comparisons and Ablation Studies}

Our simulated experiments are designed to evaluate the individual design choices in our method, as well as compare it to representative previously proposed approaches for both high-speed navigation and risk-aware/safe RL.

We conduct an comparison study in simulation to compare our work against prior methods and measure the effect of each of our design choices on two primary metrics: speed of the converged policy, measured by the robot's average speed at the end of training, and the cumulative number of safety violations incurred during training.

\begin{table*}[ht]
    \captionsetup{font=footnotesize}
\centering

\newcolumntype{r}{>{\color{rebutcolor}}c}
\begin{tabular}{l|c|c|c|c|r|r}
      & \multicolumn{2}{c|}{\texttt{Offroad-Flat}} & \multicolumn{2}{c|}{\texttt{Offroad-Bumpy}} & \multicolumn{2}{c}{\color{rebutcolor} \texttt{Cheetah}} \\
     Algorithm & \# Fails $\downarrow$ & Speed $\uparrow$ & \# Fails $\downarrow$ & Speed $\uparrow$ & \# Fails $\downarrow$ & Speed $\uparrow$ \\
     \hline
     SAC \cite{haarnoja2018soft} & $287\pm35$ & $5.87\pm0.31$ & $363\pm56$ & $4.23\pm0.23$ & $152\pm31$ & $7.67\pm1.70$ \\
     PPO \cite{schulman2017proximal} & $2750 \pm 812$ & $2.55 \pm 0.09$ & $16e3 \pm 4e3$ & $1.79 \pm 0.44$ & - & - \\
     SAC-Constrain \cite{achiam2017constrained} & $273\pm39$ & $5.93\pm0.44$ & $315\pm14$ & $4.24\pm0.21$ & $80\pm38$ & $7.45\pm1.05$ \\
     Safety critic \cite{srinivasan2020learning} & 245$\pm$34 & 5.91$\pm$0.73 & 265$\pm$86 & 3.99$\pm$0.34 & $63\pm68$ & $5.12\pm1.13$ \\
     WCSAC \cite{yang2021wcsac} & 253$\pm$71 & 2.33$\pm$3.03 & 445$\pm$59 & 3.85$\pm$1.35 & $14\pm16$ & $0.03\pm0.40$ \\
     MPPI$^*$ \cite{williams2018information} & $1000\pm275^*$ & $4.21\pm0.07^*$ & $2570\pm920^*$ & $3.60\pm0.08^*$ & 1$\pm$1 & 9.61$\pm$1.17 \\
     \rowcolor{ourscolor} \textbf{Ours} & \textbf{69$\pm$4} & \textbf{\textit{6.81$\pm$0.64}} & \textbf{100$\pm$9} & \textbf{\textit{4.59$\pm$0.30}} & \textbf{11$\pm$12} & \textbf{9.61$\pm$1.17} \\
     \rowcolor{gray!30} no epistemic & 122$\pm$17 & 6.1$\pm$0.41 & 165$\pm$29 & \textbf{\textit{4.60$\pm$0.36}} & 23$\pm$19 & \textbf{9.49$\pm$1.61} \\
     \rowcolor{gray!30} no adaptive limits & 108$\pm$9 & \textbf{6.92$\pm$0.46} & $152\pm35$ & \textbf{4.67$\pm$0.13} & 26$\pm$6 & \textbf{9.35$\pm$1.71} \\
\end{tabular}

\caption{Ablation study in simulation comparing \MethodName (\hlours{blue}) with ablations (\hlablate{gray}) as well as several prior works.
We consider the total number of safety violations across 250k steps of training (\# Fails) and the average speed of the final converged policy in m/s.
The risk-sensitive objective greatly reduces safety violations over the course of training. Adaptive action limits and epistemic uncertainty also significantly reduce the number of failures. MPPI results ($^*$) are extrapolated from convergence at 25k steps. Standard deviations are computed across 5 training runs; \textbf{bold} results are \textbf{significant} ($p<0.05$) in comparison to all prior work; \textbf{\textit{bold italic}} results are \textbf{\textit{weakly significant}} ($p<0.1$).}
\label{tab:sim}
\vspace{-10pt}

    \label{tab:results_sim}
\end{table*}

We consider two settings: an offroad driving setting and a classical locomotion task from the OpenAI gym. In the offroad driving setting, we use a simulated all-wheel-drive vehicle with suspension and Ackermann steering in two simulated environments: \verb|Offroad-Flat|, which is an infinite flat plane, and \verb|Offroad-Bumpy|, consisting of an infinite world procedurally generated using multi-scale Perlin noise \cite{perlin1985image}. Similar to the real-world experimental setup, the robot's objective is to drive towards the goal point as quickly as possible, with rewards defined by speed-made-good towards the goal, but we select a random goal from a normal distribution around the robot's current location each time a checkpoint is achieved.
We are interested in the cumulative number of failures across training, which we define by counting the cumulative number of rollover events (when the car is upside down, terminating the episode with zero reward). A rollover event can happen as a result of bumpy terrain or when the applied steering inputs are too aggressive at high speeds. Comparisons are averaged over 5 seeds each containing 250,000 timesteps.

The \verb|Cheetah| environment consists of OpenAI Gym's \verb|HalfCheetah-v4|, with the addition of a safety condition: the cheetah should remain upright, and failure to do so terminates an episode with zero return.    

In addition to standard \textbf{SAC}, we compare against several baselines:
\textbf{SAC-Constrain} \cite{achiam2017constrained, as2022constrained} penalizes failure via a Lagrange multiplier that is simultaneously adapted to keep a particular constraint violation rate. \textbf{WCSAC} implements distributional RL by parametrizing $Z$ as a normal distribution with learned mean and variance and optimizes a closed-form CVaR expression \cite{yang2021wcsac}. \textbf{Safety Critics} jointly learn a policy and a $Q$-function yielding the probability of remaining safe \cite{srinivasan2020learning, bharadhwaj2020conservative}, then perform rejection sampling on actions to find an action with $Q(s, a) < \epsilon$.
We include \textbf{PPO} \cite{schulman2017proximal} for completeness, though it is not well-suited for our setting due to sample inefficiency and does not achieve high performance. SAC-based baselines use the learned maximum-entropy policy directly for exploration, with the exception of safety critics which additionally performs rejection sampling on policy actions.

We also perform several ablations to consider the effects of several design choices: use of CVaR as a risk measure (Sec.~\ref{sec:cvar-rl}), epistemic uncertainty in the critic (Sec.~\ref{sec:epistemic}), and adaptive action-space limits (Sec.~\ref{sec:adaptive-limits}).

\begin{figure}[ht]
    \centering
    \includegraphics[width=0.9\columnwidth]{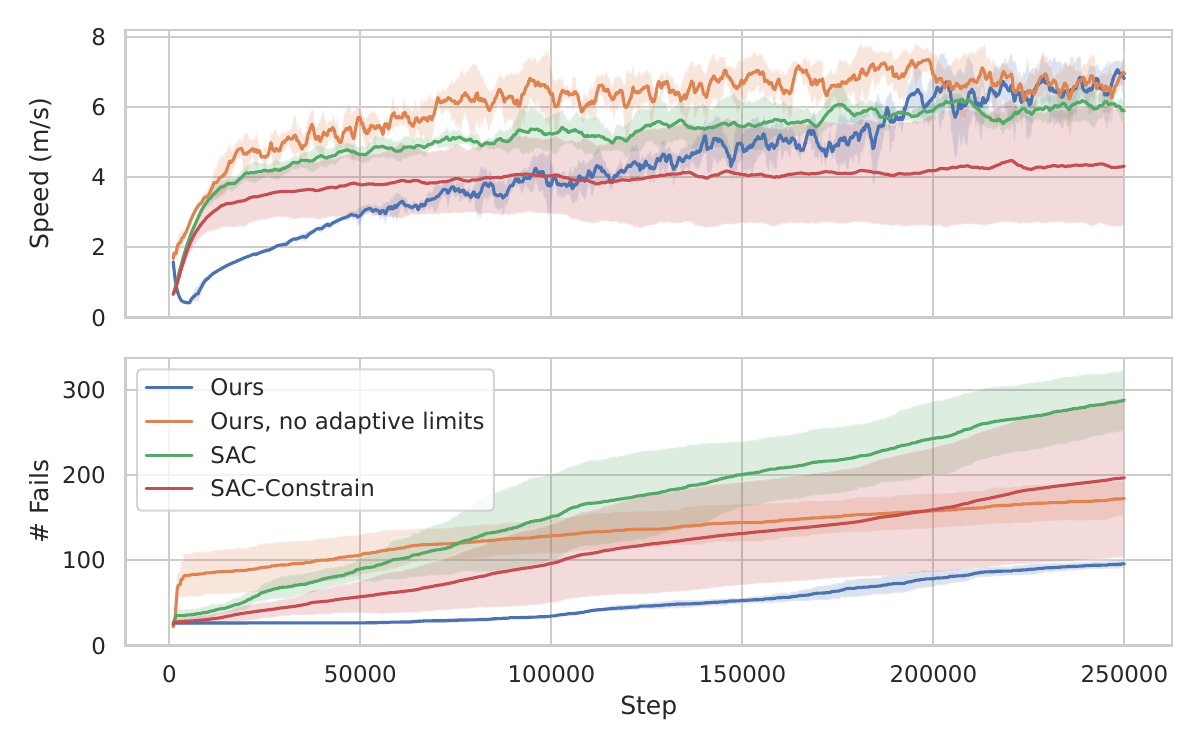}
    \caption{Average policy speed (top) and cumulative safety violations (bottom) for selected methods in the \texttt{Flat} environment. \MethodName learns a high-speed policy with few safety violations. Comparable methods (e.g. SAC-Constrain) reduces safety violations to an extent but result in an overly conservative policy.}
    \label{fig:learning_curves}
\end{figure}

We present our results in Tab.~\ref{tab:results_sim}. \MethodName exhibits far fewer total of violations during training than all baselines considered, while exhibiting slightly \textit{better} final performance. We hypothesize that when fewer failures occur during training, less of the model's representational capacity is wasted expressing high-loss outlier events, allowing for better estimation of the return earlier in the training process.

Perhaps surprisingly, non risk-sensitive methods tend to learn less performant final policies. They achieve relatively slow driving behavior in addition to a much higher failure rate. This is likely due to function approximation error in the critic $Q(s, a)$: MSE minimization tends to be very sensitive to outlier events (e.g. failures).

While safety critics do slightly reduce failures across training, the impact is relatively small in the driving setting, and negatively impacts performance in the locomotion setting. We note that \citet{srinivasan2020learning} suggests to train the policy on \textit{unfiltered} (unsafe) actions, which is not possible in the purely online setting where prior unsafe data is not available. We instead apply it to the online setting, which may limit its ability to accurately classify safe actions. The Gaussian critic model in WCSAC \cite{yang2021wcsac} exhibited unstable training and poor performance.

Ablating the adaptive control limiting mechanism presented in Sec.~\ref{sec:adaptive-limits} results in similar final policy performance to the full implementation \MethodName but incurs many more failures early in training Fig.~\ref{fig:learning_curves}. Removing the explicit handling of epistemic uncertainty described in Sec.~\ref{sec:epistemic} also causes increased failures and lower final speed.

\begin{figure}[ht]
    \centering
    \includegraphics[width=0.75\columnwidth,trim={0cm 0.7cm 0cm 1.3cm}]{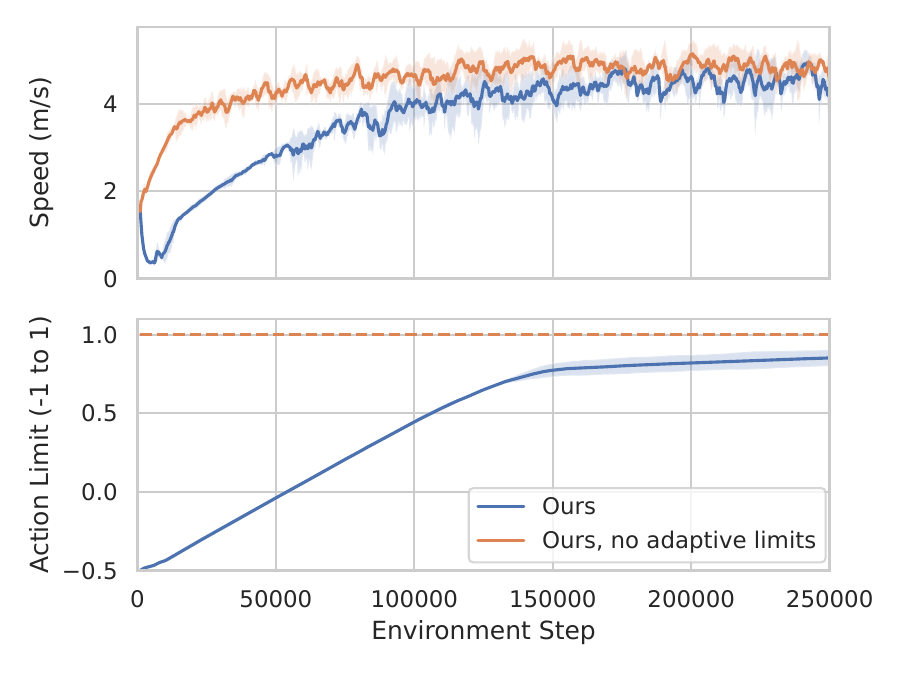}
    \caption{Comparison of \MethodName with and without adaptive limits. (Top) comparison of speed: standard CVaR formulation quickly learns a fast driving policy but fails frequently in the early stages of training. (Bottom) the CVaR+limits formulation slowly increases the maximum action over time, tapering off at the high end.}
    \label{fig:limits_comparison}
    \vspace{-10pt}
\end{figure}

\begin{figure}[ht]
    \centering
    \includegraphics[width=0.9\columnwidth]{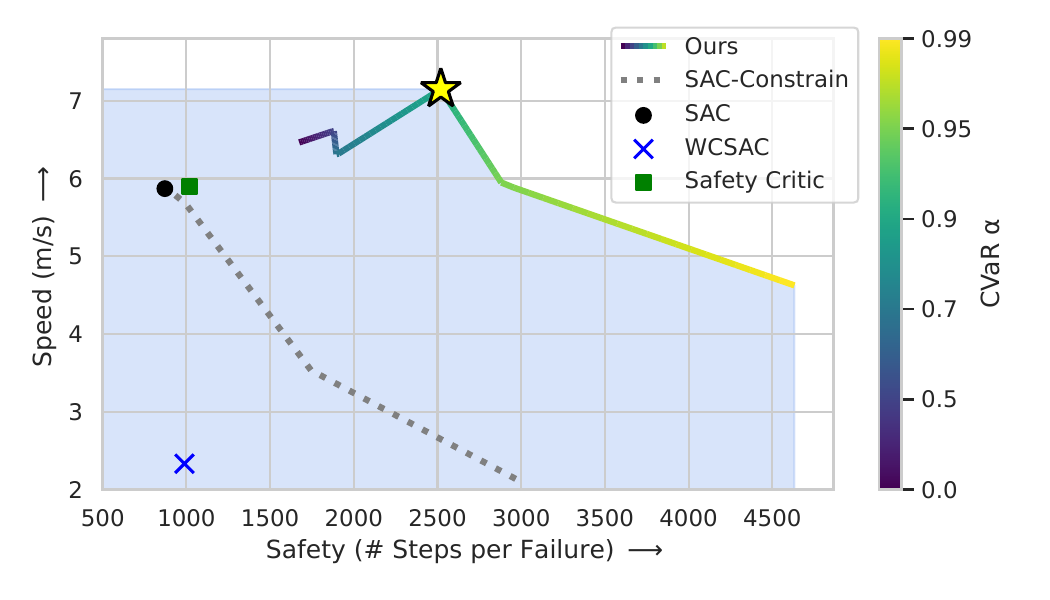}
    \caption{The parameter $\alpha$ trades off between safety and performance. While performance decreases dramatically at extreme risk-sensitivity ($\alpha \ge 0.95$), for moderate $\alpha \approx 0.9$ it actually results in \textit{increased} performance compared to risk-neutral settings ($\alpha=0$) while accruing far fewer safety violations. The blue region represents points that are dominated by \MethodName in both metrics for some value of $\alpha$.}
    \label{fig:pareto-frontier}
    \vspace{-15pt}
\end{figure}

The $\alpha$ parameter controls the conservatism of the CVaR metric. Fig.~\ref{fig:pareto-frontier} shows that as $\alpha \to 1$, the learned policy becomes more conservative and experiences fewer safety violations but somewhat decreased performance. However, \MethodName retains high performance for a wide range of $\alpha$.

\subsection{Analysis of the Learned Critic}

To better understand why risk-sensitive RL appears to be particularly helpful in the aggressive driving problem studied, we analyze the behavior of the distributional critic learned by \MethodName in risky states.

Fig.~\ref{fig:analysis-filmstrip-fail} shows the critic outputs in a case in which the policy experiences a failure. We highlight that only some members of the critic ensemble successfully model the low-probability mode corresponding to failure. Although the model knows that a failure is possible, the mean of the return distribution $Z$ actually remains relatively high. However, the CVaR objective only considers the tail of the distribution and thus decreases sharply in the face of ensemble disagreement, as in Theorem~\ref{thm:cvar-tail-emd}. This indicates that the CVaR-based actor should be more responsive to these uncertain events.

Fig.~\ref{fig:analysis-filmstrip-recovery} shows recovery behavior demonstrated by the learned policy. The critic still identifies the low-probability failure mode, but in this case is able to recover by steering in the ``downhill'' direction, a technique applied by human offroad driving experts \cite{Bruce2018Rollover}.

\begin{figure}[ht]
    \centering
    \includegraphics[width=0.9\columnwidth]{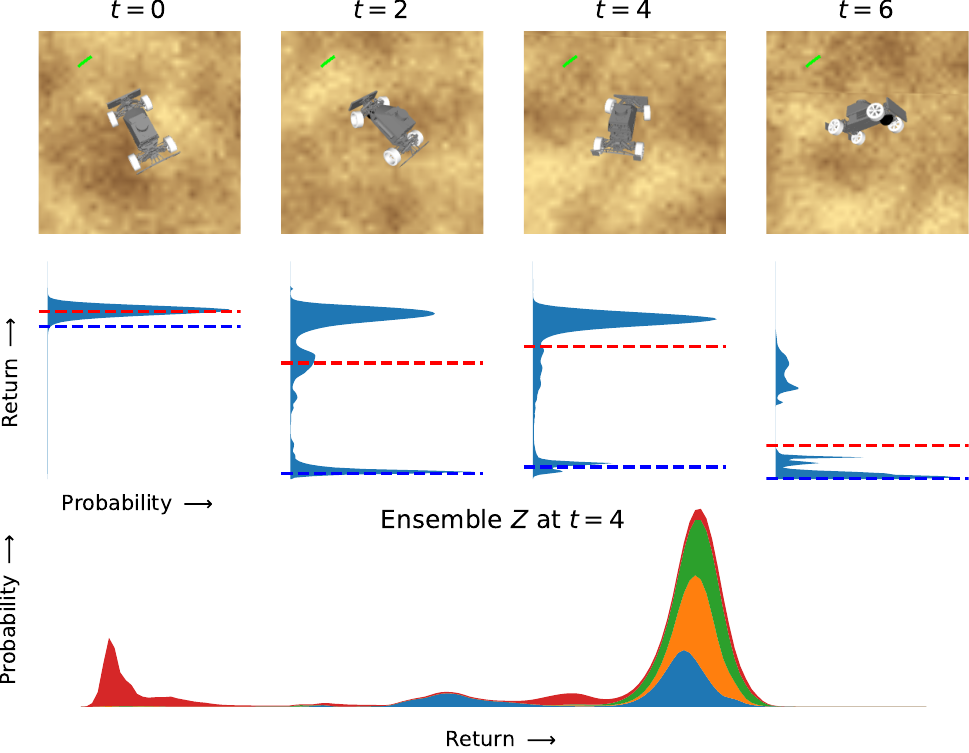}
    \caption{Sequence of simulated states ending in a failure (rollover). Below each image the distributional critic $Z(s, a)$ is displayed with its mean in {\color{red} red} and $\cvar_{\alpha=0.9}$ in {\color{blue} blue}. While the mean of the distribution remains high even when the critic has identified the low-probability failure mode (frame 4), the risk-sensitive CVaR metric appropriately penalizes the failure mode immediately as soon as it is detected. The final plot shows the individual ensemble member distributions at $t=4$. In this case only one ensemble member identifies the risky failure mode (red), highlighting the importance of handling epistemic uncertainty to avoid overconfidence.}
    \label{fig:analysis-filmstrip-fail}
\end{figure}

\begin{figure}[ht]
    \centering
    \includegraphics[width=0.9\columnwidth]{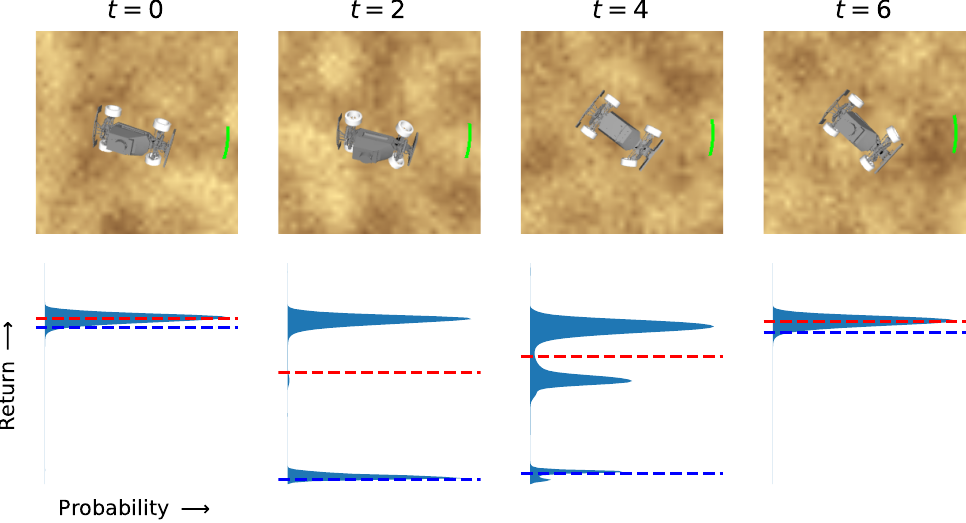}
    \caption{The risk-sensitive policy successfully recovers from a likely failure ($t=2$) by steering in the direction of the roll as soon as the critic detects a possible rollover. Goal direction indicated in green.}
    \label{fig:analysis-filmstrip-recovery}
    \vspace{-15pt}
\end{figure}

\section{Discussion, Limitations, and Future Work}

We presented \MethodName, a method for risk-sensitive real-world reinforcement learning applied to high-speed navigation that can acquire effective high-speed policies for driving on uneven terrain while minimizing the number of \emph{training time} failures. The core of our method is a novel extension of the CVaR principle to deep RL with epistemic uncertainty, which constitutes to our knowledge the first CVaR-based deep RL algorithm that makes use of full distributional critics, accounts for epistemic uncertainty during training (which is important for avoiding risky behavior in unfamiliar states), and works well for real-world robotic control tasks. Our experiments demonstrate that our approach leads to fewer training-time failures and actually enables both real-world and simulated robots to attain faster driving speeds.

Our method has a number of limitations. Firstly, we require the existence of some ``safe'' region of action space. While this assumption is reasonable in many cases -- often a ``do-nothing'' policy is quite easy to construct -- it is not applicable to all settings. While \MethodName does effectively reduce the number of failures, it does not eliminate them entirely in all cases. Indeed, our aim is not to provide a true ``safety constraint'' (as would be necessary, for example, for full-scale autonomous driving systems), but only to reduce the number of failures enough so as to obtain the best final policy performance. Eliminating failures entirely likely requires additional domain knowledge, and studying this direction further is an important topic for future work. However, our results do show that our approach reduces the number of failures by a large margin and, perhaps surprisingly, that reducing the number of failures during \emph{training} actually leads to more performant \emph{final} policies. We hope for this reason that our approach will provide an important stepping stone toward RL algorithms that are practical to use in the real world for safety-critical applications.

\FloatBarrier
\newpage

\section*{Acknowledgements}
This research was supported by DARPA ASNR, DARPA Assured Autonomy, and NSF IIS-2150826. The authors would like to thank Qiyang Li and Laura Smith for feedback on a draft version of this paper.

\nocite{*}
\bibliographystyle{plainnat}
\bibliography{references}

\FloatBarrier
\clearpage
\onecolumn
\begin{appendices}
\section{Proof of CVaR Theorems}\label{appendix:cvar-proofs}
\setcounter{thm}{0}
\setcounter{figure}{0}
\setcounter{table}{0}

We begin by restating the theorems listed in Section~\ref{sec:cvar-rl}. Let $Z_i$ be $N$ real-valued random variables with density $p_i(z)$ and cumulative distribution $P_i$. Additionally, let $\hat p(z) = \frac{1}{N}\sum_{i=1}^{N} p_i(z)$ be the density for the ensemble, with cumulative density function $\hat P(z) = \frac{1}{N}\sum_{i=1}^N P_i(z)$, and let $\hat Z$ be a random variable drawn from this distribution.

\begin{lemma}\label{appendix:cvar-convex}
    CVaR is convex, in the sense that $\cvar_\alpha[Z] \le \lambda \cvar_\alpha[X] + (1-\lambda)\cvar_\alpha[Y]$ when $Z$ is the mixture distribution $p_Z(\cdot) = \lambda p_X(\cdot) + (1-\lambda) p_Y(\cdot)$.
\end{lemma}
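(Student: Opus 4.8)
The plan is to reduce the convexity claim to subadditivity of the supremum, by exploiting the Rockafellar--Uryasev variational representation of CVaR. For the lower-tail convention used here, I would first establish that for any random variable $W$ with finite first moment,
\begin{equation}
\cvar_\alpha(W) = \sup_{t \in \R}\left\{ t - \frac{1}{\alpha}\,\E\left[(t - W)^+\right]\right\},
\end{equation}
where $(x)^+ = \max(x, 0)$. I would verify this by noting that $F_W(t) := t - \tfrac{1}{\alpha}\E[(t-W)^+]$ is concave in $t$ (since $t \mapsto (t-W)^+$ is convex), differentiating to find that a maximizer occurs where $\Pr[W < t] = \alpha$, i.e. at $t = \var_\alpha(W)$, and then checking directly that $F_W(\var_\alpha(W)) = \tfrac{1}{\alpha}\E[W\,\mathbb{1}(W < \var_\alpha(W))] = \E[W \mid W < \var_\alpha(W)] = \cvar_\alpha(W)$.

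The crucial observation is that for \emph{fixed} $t$, the inner functional is \emph{affine} in the distribution of its argument. Because $Z$ has mixture density $p_Z = \lambda p_X + (1-\lambda)p_Y$, expectations against it split linearly:
\begin{equation}
\E_Z\left[(t-Z)^+\right] = \lambda\,\E_X\left[(t-X)^+\right] + (1-\lambda)\,\E_Y\left[(t-Y)^+\right],
\end{equation}
and hence $F_Z(t) = \lambda F_X(t) + (1-\lambda) F_Y(t)$ for every $t$.

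To conclude, I would take the supremum over $t$ of both sides of this identity and apply subadditivity of the supremum together with $\lambda, 1-\lambda \ge 0$:
\begin{equation}
\cvar_\alpha(Z) = \sup_t\big[\lambda F_X(t) + (1-\lambda)F_Y(t)\big] \le \lambda\sup_t F_X(t) + (1-\lambda)\sup_t F_Y(t) = \lambda\,\cvar_\alpha(X) + (1-\lambda)\,\cvar_\alpha(Y).
\end{equation}
Using the sup-subadditivity form (rather than evaluating at a single maximizer) has the advantage of never requiring the supremum to be attained, so it applies even if the optimizing $t$ is not unique or well-behaved.

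The main obstacle is entirely in the first step: pinning down the variational representation with the correct orientation (lower tail) and constant $\tfrac{1}{\alpha}$, and handling distributions with atoms so that the $\alpha$-quantile and the conditional expectation $\E[W\mid W<\var_\alpha(W)]$ are defined consistently (for the categorical critics of interest the support is finite, so this is routine, but I would state the clean continuous case and remark that the discrete case follows by the same quantile-integral argument). Once the representation is established, the convexity is a one-line consequence, requiring nothing about the mixture beyond linearity of expectation.
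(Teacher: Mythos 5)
Your proof is correct, and it takes a genuinely different route from the paper's at the decisive step. Both arguments start from the Rockafellar--Uryasev variational characterization of CVaR, but the paper (following Pflug) works with the loss-oriented form $\inf_a\{a + \tfrac{1}{1-\alpha}\E[\max(0, Z-a)]\}$, picks the minimizers $a_1, a_2$ for $X$ and $Y$, evaluates at the interpolated point $\lambda a_1 + (1-\lambda)a_2$, and invokes pointwise convexity of $z \mapsto \max(0,z)$ applied to the random variable $\lambda X + (1-\lambda)Y$. That argument proves convexity of CVaR as a functional on \emph{random variables}, $\cvar_\alpha(\lambda X + (1-\lambda)Y) \le \lambda\cvar_\alpha(X) + (1-\lambda)\cvar_\alpha(Y)$, and its first line identifies the mixture $Z$ with the linear combination $\lambda X + (1-\lambda)Y$ --- a different object (a mixture of point masses at $0$ and $1$ is Bernoulli, while the linear combination is a single point mass). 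Your argument instead uses the fact that for fixed $t$ the variational objective is \emph{affine in the law} of its argument, so the mixture hypothesis gives the exact identity $F_Z(t) = \lambda F_X(t) + (1-\lambda)F_Y(t)$, and sub-additivity of the supremum finishes. This is precisely the statement the lemma asserts (convexity in the \emph{distribution}), and it is the version Theorem~\ref{thm:cvar-ensemble} actually needs for the ensemble mixture $\hat p = \frac{1}{N}\sum_i p_i$. Your route has two further advantages: the lower-tail sup form $\sup_t\{t - \tfrac{1}{\alpha}\E[(t-W)^+]\}$ is the orientation consistent with the paper's definition $\cvar_\alpha(Z) = \E[Z \mid Z < \var_\alpha(Z)]$ (had you applied the paper's inf form directly to a mixture, the infimum of affine functionals would yield concavity, i.e. the inequality in the opposite direction), and the sup-subadditivity step never requires the supremum to be attained. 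What the paper's route buys in exchange is the random-variable convexity statement, which is of independent interest but is not what the lemma, as stated, requires.
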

\begin{proof}
    Consider the expression:
    \[\inf_{a \in \R}\left[a + \frac{1}{1-\alpha}\E[\max(0, Z-a)]\right].\]
    Assuming smoothness, the infinimum occurs when the derivative of the expression is zero with respect to $a$:
    \[f(a, Z) = a + \frac{1}{1-\alpha}\E[\max(0, Z-a)]\]
    \[\frac{d}{da}f(a, Z) = 1 - \frac{1}{1-\alpha}\int_Z 1_{Z\ge a} dp(Z) = 1 - \frac{1}{1-\alpha}P(Z\ge a)\]
    Of course, this expression is equal to zero when $P(Z\ge a) = 1-\alpha$ (when $a$ is the VaR of $Z$). It then follows that:
    \[\inf_{a \in \R}\left[a + \frac{1}{1-\alpha}\E[\max(0, Z-a)]\right] = \frac{1}{1-\alpha}\E[\max(0, Z-\var_\alpha(Z))] = \cvar_\alpha(Z)\].
    With this alternative definition of CVaR (see \citet{pflug2000some} for a more rigorous derivation), we assume $a_1, a_2$ be the arg-min in this definition for $X$ and $Y$ respectively. Then, again following \citet{pflug2000some}:
    \begin{align*}
        \cvar_\alpha(Z) &= \cvar_\alpha(\lambda X + (1 - \lambda)Y) \\
        &\le \lambda a_1 + (1-\lambda) a_2 + \frac{1}{1-\alpha}\E\max(0, \lambda X + (1-\lambda)Y - \lambda a_1 - (1-\lambda) a_2) \\
        &\le \lambda \left[a_1 + \frac{1}{1-\alpha}\E\max(0, X - a_1)\right] + (1-\lambda) \left[a_2 + \frac{1}{1-\alpha}\E\max(0, Y - a_2)\right] \\
        &\le \lambda \cvar_\alpha(X) + (1-\lambda)\cvar_\alpha(Y)
    \end{align*}
\end{proof}

\begin{thm}\label{appendix:cvar-ensemble}
    Let $Z_i$ be real-valued random variables with density $p_i(z)$. Denote the random variable with density $\hat p(z) = \frac{1}{N}\sum_i p(z)$ as $\hat Z$. Then for $\alpha > 0$:
    \[\cvar_\alpha(P) \le \frac{1}{N}\sum_i \cvar_\alpha(P_i)\]
    We call the positive difference $\frac{1}{N}\sum_i \cvar_\alpha(Z_i) - \cvar_\alpha Z_i$ the \textit{CVaR gap}.
\end{thm}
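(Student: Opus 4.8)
The statement is precisely the equal-weight, $N$-component generalization of the convexity of CVaR established in Lemma~\ref{appendix:cvar-convex}, so the plan is to reduce to that lemma (or, more transparently, to the same variational identity that underlies it). The cleanest route I would take is through the dual representation of CVaR. Writing the worst-case (lower-tail) CVaR in Rockafellar--Uryasev form,
\[\cvar_\alpha(Z) = \sup_{a \in \R}\Big[a - \tfrac{1}{\alpha}\E[\max(0, a - Z)]\Big],\]
with the supremum attained at $a = \var_\alpha(Z)$, turns the problem into an elementary optimization fact once the mixture structure is exploited.

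The key step is to observe that, because $\hat Z$ has density $\hat p = \frac1N\sum_i p_i$, expectations under $\hat Z$ are arithmetic means of the per-component expectations: for any fixed $a$, $\E_{\hat Z}[\max(0, a - \hat Z)] = \frac1N\sum_i \E_{Z_i}[\max(0, a - Z_i)]$. Hence the inner objective for $\hat Z$ is exactly the average $\frac1N\sum_i g_i(a)$ of the per-component objectives $g_i(a) = a - \tfrac1\alpha\E_{Z_i}[\max(0,a-Z_i)]$, and since the supremum of an average is bounded above by the average of the suprema, $\cvar_\alpha(\hat Z) = \sup_a \frac1N\sum_i g_i(a) \le \frac1N\sum_i \sup_a g_i(a) = \frac1N\sum_i \cvar_\alpha(Z_i)$, which is the claim. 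As an alternative that invokes Lemma~\ref{appendix:cvar-convex} directly, I would induct on $N$: split $\hat p = \frac1N p_N + \frac{N-1}{N}\tilde p$ where $\tilde p = \frac{1}{N-1}\sum_{i<N} p_i$ is itself a valid $(N-1)$-component mixture density, apply the two-component convexity with weight $\lambda = \frac1N$, and then apply the inductive hypothesis to the mixture with density $\tilde p$; the weights telescope back to $\frac1N$ each.

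The main thing to get right --- and the only real subtlety --- is the direction of the inequality, which is entirely determined by the tail convention. For the worst-case/lower-tail CVaR relevant here, the dual is a supremum, so ``sup of an average $\le$ average of sups'' yields the stated $\le$; the upper-tail (infimum) form would instead give the reverse inequality for a mixture, so one must be careful to match the convention to the definition $\cvar_\alpha(Z) = \E[Z \mid Z < \var_\alpha(Z)]$. Beyond this, I would only need to assume $Z_i$ has finite first moment so that each $\cvar_\alpha(Z_i)$ is finite and the variational identity is valid; the rest is bookkeeping, and in particular the argument immediately extends to arbitrary mixture weights $\lambda_i \ge 0$ with $\sum_i \lambda_i = 1$.
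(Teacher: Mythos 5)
Your proof is correct, and it takes a genuinely different (and more careful) route than the paper. The paper's proof of this theorem is a one-liner: it invokes Jensen's inequality, i.e., it reduces the $N$-component statement to the two-component mixture convexity of Lemma~\ref{appendix:cvar-convex}, which in turn is proved via the Rockafellar--Uryasev variational formula. Your primary argument instead applies the variational representation directly to the $N$-mixture: since $\E[f(\hat Z)] = \frac{1}{N}\sum_i \E[f(Z_i)]$ for any $f$, the dual objective for $\hat Z$ is the average of the per-component objectives, and ``sup of an average $\le$ average of sups'' gives the claim in one step, with no induction or appeal to finite Jensen needed. This buys two things beyond brevity. First, it handles the mixture structure honestly: the paper's lemma proof actually manipulates $\cvar_\alpha(\lambda X + (1-\lambda)Y)$ where $\lambda X + (1-\lambda)Y$ is a \emph{sum of random variables}, whose law is not the mixture $\lambda p_X + (1-\lambda)p_Y$ named in the lemma statement, so the paper is really proving convexity with respect to convex combinations of random variables and silently transferring it to mixtures; your use of linearity of expectation over the mixture density avoids this conflation entirely. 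Second, your remark about tail conventions is exactly the right flag: the paper's lemma proof uses the upper-tail infimum form $\inf_a\bigl[a + \frac{1}{1-\alpha}\E[\max(0, Z-a)]\bigr]$, and for a genuine mixture the infimum form yields the \emph{reverse} inequality (inf of an average is at least the average of infs); only the lower-tail supremum form, matching the paper's definition $\cvar_\alpha(Z) = \E[Z \mid Z < \var_\alpha(Z)]$, produces the stated $\le$. Your secondary route (induction on $N$ via the two-component lemma) is essentially the paper's intended argument made rigorous, so you have both reconstructed the paper's reduction and supplied a cleaner self-contained alternative.
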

\begin{proof}
    The result follows directly from applying Jensen's inequality to $\cvar_\alpha(P)$.
\end{proof}

\renewcommand{\tail}[1]{\mathcal{T}_\alpha{#1}}
\renewcommand{\emd}[2]{\left\lVert #1 - #2 \right\rVert_{\mathrm{EMD}}}
\renewcommand{\tailemd}[2]{\emd{\tail{#1}}{\tail{#2}}}

\begin{dfn}
    Let $Z$ be a real-valued random variable. Then for $0 < \alpha < 1$ define the \textbf{tail distribution} $\tail{Z}$ as the distribution with probability mass $\frac{1}{1-\alpha}p(z)$ over the support:
\[\mathrm{supp}(\tail{Z}) = (-\infty, \var_\alpha(Z)] \, \cap \, \mathrm{supp}(Z).\]
We also denote the density function of $\tail{Z}$ as $\tail{p(z)}$ and its cumulative density as $\tail{P(z)}$.
\end{dfn}
Note that $\cvar_\alpha(Z) = \E[\tail{Z}]$. We now provide the proof of our first theorem:
\begin{dfn}
Let $X, Y$ be real-valued random variables with cumulative distribution functions $\Phi_X, \Phi_Y$ respectively. Define the \textbf{earth-mover's distance} $\emd{X}{Y}$ as:
\[\int_\R \lvert \Phi_X(x) - \Phi_Y(x) \rvert dx\]
\end{dfn}
\begin{thm}\label{appendix:cvar-tail-emd}
    Let $Z_i$ be random variables with density $p_i(z)$. Assume $Z_i$ has finite first moment and denote the mixture distribution as $\hat Z$ with density $\hat p(z)$. Then we have:
    \[\frac{1}{N}\sum_i\cvar_\alpha(Z_i) - \cvar_\alpha(\hat Z) \le \frac{1}{N}\sum_i \tailemd{\hat Z}{Z_i}\]
\end{thm}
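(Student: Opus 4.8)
The plan is to reduce the claim to a single elementary fact about how the earth-mover's distance controls differences of means, and then apply it term by term. The starting point is the identity noted just above the statement, $\cvar_\alpha(Z) = \E[\tail{Z}]$, which lets me rewrite the left-hand side as an average of differences of expectations of tail distributions:
\[
\frac{1}{N}\sum_i \cvar_\alpha(Z_i) - \cvar_\alpha(\hat Z)
= \frac{1}{N}\sum_i \Big( \E[\tail{Z_i}] - \E[\tail{\hat Z}] \Big).
\]
Everything then follows if I can bound each summand $\E[\tail{Z_i}] - \E[\tail{\hat Z}]$ by $\tailemd{\hat Z}{Z_i}$.

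The key lemma I would establish is that for any two real-valued random variables $X, Y$ with finite first moment and CDFs $\Phi_X, \Phi_Y$,
\[
\E[X] - \E[Y] = \int_{\R}\big(\Phi_Y(x) - \Phi_X(x)\big)\,dx \le \emd{X}{Y}.
\]
The equality is the standard ``layer-cake'' representation of the mean: writing $\E[X] = \int_0^\infty (1 - \Phi_X(x))\,dx - \int_{-\infty}^0 \Phi_X(x)\,dx$ and subtracting the analogous expression for $Y$, the constant terms cancel and the two half-line integrals combine into a single integral of $\Phi_Y - \Phi_X$ over all of $\R$. The finite-first-moment hypothesis is exactly what guarantees these integrals converge, so that is where the assumption is used. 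The inequality is then immediate from $\Phi_Y - \Phi_X \le |\Phi_X - \Phi_Y|$ together with the definition of the earth-mover's distance.

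To finish, I would apply this lemma with $X = \tail{Z_i}$ and $Y = \tail{\hat Z}$. Each tail distribution has a finite mean—indeed its mean is exactly the corresponding CVaR, which is finite because each $Z_i$, and hence the mixture $\hat Z$, has finite first moment—so the lemma applies and gives $\E[\tail{Z_i}] - \E[\tail{\hat Z}] \le \emd{\tail{Z_i}}{\tail{\hat Z}} = \tailemd{\hat Z}{Z_i}$, using that the EMD is symmetric. Averaging over $i$ yields the theorem. The step I expect to require the most care is the mean-as-CDF-integral identity: getting the signs and the split at $0$ right and confirming convergence of each piece under the finite-moment assumption. Notably, the argument never needs to track how $\tail{\hat Z}$ decomposes in terms of the individual $\tail{Z_i}$; treating the two tail distributions as generic random variables and invoking the generic mean-difference bound sidesteps any combinatorics of the mixture, which I expect to be the cleanest route.
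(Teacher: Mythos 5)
Your proposal is correct and takes essentially the same approach as the paper: both reduce the theorem to the elementary identity that a difference of means equals the signed integral of the difference of CDFs, apply it pairwise to $\tail{Z_i}$ and $\tail{\hat Z}$ without ever using the mixture structure of $\hat Z$, and then bound that signed integral by the EMD by dropping the absolute value. The only cosmetic difference is how the identity is justified --- you use the layer-cake representation of the mean, while the paper integrates by parts and kills the boundary terms via the finite-first-moment assumption.
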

\begin{proof}
    \begin{align*}
        \frac{1}{N}\sum_{i=1}^N\emd{\tail{Z_i}}{\tail{\hat Z}} &= \frac{1}{N}\sum_{i=1}^N\int_\R \lvert \tail{\hat P(z)} - \tail{P_i(z)} \rvert dz \\
        &\ge \frac{1}{N}\sum_{i=1}^N\int_{\R} \left(\tail{\hat P(z)} - \tail{P_i(z)}\right) dz
    \end{align*}

    Integration by parts gives that $\int_{\R} (P(x) - Q(x)) \,dx = \left[x (P(x) - Q(x))\right]_{-\infty}^{\infty} + \int_a^b (q(x) - p(x)) x \, dx$:
    \begin{align*}
        \frac{1}{N}\sum_{i=1}^N\left[\lim_{z\to\infty}z\left(\tail{P_i(z)} - \tail{\hat P(z)} + \tail{P_i(-z)} - \tail{\hat P(-z)}\right) + \int_\R \left(\tail{p_i(z)} - \tail{\hat p(z)}\right)z\,dz \right]
    \end{align*}
    The first term vanishes because all cumulative density functions approach 0 and 1 at $\pm \infty$ respectively, and the distributions have finite moments. Then we are left with:
    \[\frac{1}{N}\sum_{i=1}^N \tailemd{Z_i}{\hat Z} \ge \frac{1}{N}\sum_i \E\tail{Z_i} - \E\tail{\hat Z} = \frac{1}{N}\sum_i \cvar_\alpha{Z_i} - \cvar_\alpha \hat Z\]
\end{proof}

\section{Emperical Analysis of CVaR and Ensemble Divergence}\label{appendix:cvar-gap-empirical}
While Theorem~\ref{appendix:cvar-tail-emd} provides an upper-bound on CVaR gap based on the tail EMD, it does not provide a lower bound stricter than that in Theorem~\ref{appendix:cvar-ensemble}. In fact, for any tail-EMD $D \ge 0$ it is possible to construct an adversarial ensemble distribution with zero CVaR gap; for example (with $\alpha = 0.5$):
\[p_1(z) = \begin{cases}
    0.5 & z = D \\
    0.5 & z = D/2 \\
    0 & \mathrm{otherwise}
\end{cases}\quad\quad p_2(z) = \begin{cases}
    0.5 & z = D \\
    0.5 & z = -D/2 \\
    0 & \mathrm{otherwise}
\end{cases}\quad\quad \hat p(z) = \begin{cases}
    0.5 & z = D \\
    0.25 & z = D/2 \\
    0.25 & z = -D/2 \\
    0 & \mathrm{otherwise}
\end{cases}\]

Nonetheless, in practice we find that tail-EMD empirically correlates extremely well with the CVaR gap for typical distributions (bounded, smooth). We probe this relationship in the setting of random gaussian-mixture ensemble distributions using the following procedure:
\begin{enumerate}
    \item Select a random ``base'' distribution as a Gaussian mixture with $K=3$ components with parameters sampled randomly.
    \item Create $N=3$ ensemble members by randomly sampling parameters around the ``base'' parameters.
    \item Find the mixture distribution of the ensemble members then compute $\frac 1 N \sum_i \tailemd{Z_i}{\hat Z}$ and the CVaR gap.
\end{enumerate}

Plotting the CVaR gap and tail-EMD yields Figure~\ref{appendix:fig:cvar-tail-emd}. There is a clear relationship between the two. This indicates that pessimism in RACER correlates extremely well with ensemble divergence.

\begin{figure}
    \centering
    \includegraphics[width=0.3\textwidth]{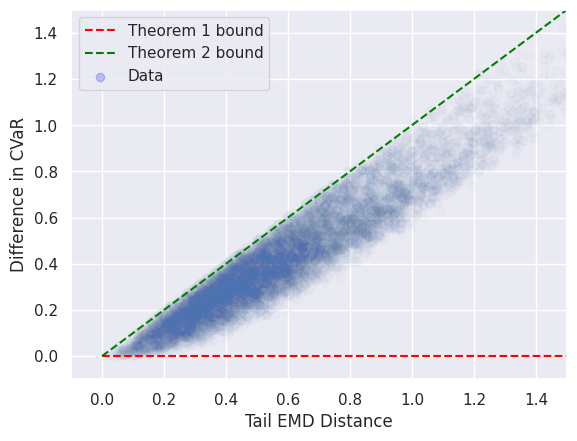}
    \caption{Tail EMD and CVaR gap for randomly sampled mixtures of Gaussians. CVaR gap correlates very well with tail EMD, indicating that the bound provided in Theorem~\ref{appendix:cvar-tail-emd} is relatively tight.}
    \label{appendix:fig:cvar-tail-emd}
\end{figure}

\end{appendices}

\end{document}